\documentclass{article}

\usepackage{amsmath,amssymb,amsthm}
\usepackage{xspace}
\usepackage{epsfig}
\usepackage{color}
\usepackage{graphicx}
\usepackage{tikz}
\usepackage{colortbl}
\usepackage{todonotes}
\usepackage[noend]{algorithmic}
\usepackage{color}
\usepackage{url}
\usepackage{booktabs,multirow}
\usepackage[vlined,linesnumbered,ruled,titlenotnumbered]
{algorithm2e}
\newcommand{\ignore}[1]{}

\definecolor{brewerGreen}{RGB}{27,147,108}
\definecolor{brewerOrange}{RGB}{212,85,4}
\definecolor{brewerViolet}{RGB}{105,100,170}

\newtheorem{theorem}{Theorem}

\newtheorem{lemma}[theorem]{Lemma}

%%%%
%Formal definitions
%%%%
\newcommand{\wcity}{\ensuremath{\omega}}
\newcommand{\wtour}{\ensuremath{\mathcal W}}
\newcommand{\lcity}{\ensuremath{\ell}}
\newcommand{\ltour}{\ensuremath{\mathcal L}}
%%%%
%Problem names
%%%%

%our problems

\newcommand{\wdtsp}{{weight\-ed TSP}}
\newcommand{\wdtsps}{W-TSP}

\newcommand{\onewdtsp}{{1-weighted TSP}}
\newcommand{\onewdtsps}{1W-TSP}

\newcommand{\uwdtsp}{{uniform weighted TSP}}
\newcommand{\uwdtsps}{UWDTSP}

%related TSP variants

%other related problems, not strictly searching for a complete round-trip

\newcommand{\latency}{{Minimum Latency Problem}}
\newcommand{\latencys}{MLP}

%useful other problems

% \author{Jakob Bossek}
% \affiliation{%
%   \institution{School of Computer Science}
%   \city{The University of Adelaide, Australia}
% }

% \author{Katrin Casel}
% \affiliation{%
%   \institution{Chair for Algorithm Engineering}
%   \city{Hasso Plattner Institute, Germany}
% }

% \author{Pascal Kerschke}
% \affiliation{%
%   \institution{Department of Information Systems}
%   \city{University of M\"unster, Germany}
% }

% \author{Frank Neumann}
% \affiliation{%
%   \institution{School of Computer Science}
%   \city{The University of Adelaide, Adelaide, Australia}
% }

\begin{document}

\author{Jakob Bossek$^1$, Katrin Casel$^2$,\\ Pascal Kerschke$^3$, Frank Neumann$^1$ \vspace{5pt} \\  $^1$Optimisation and Logistics, The University of Adelaide,\\ Adelaide, Australia\\
$^2$Chair for Algorithm Engineering,\\ Hasso Plattner Institute, Germany\\
$^3$Department of Information Systems,\\ University of M\"unster, Germany}

\title{The Node Weight Dependent Traveling Salesperson Problem: Approximation Algorithms and Randomized Search Heuristics}

\maketitle

\begin{abstract}
% Several important optimization problems in the area of vehicle routing can be seen as a generalization of the classical Traveling Salesperson Problem (TSP). In this paper, we investigate the relation of such problems where the cost of traveling depends on weights of nodes already visited during a tour. This provides abstractions of important TSP variants such as the Traveling Thief Problem and time dependent TSP variants. We examine the relation of the weight dependent TSP to the classical TSP and provide approximation algorithms for the case of metric distances. Furthermore, we provide experimental insights into how the weights are affecting the quality of the solution by using a state-of-the-art heuristic approach for the TSP and its adaptation to the weight dependent TSP.
%This paper aims to contribute to the theoretical understanding of versions of the %traveling salesperson problem with nodes weights and the performance of evolutionary %algorithms for them.
Several important optimization problems in the area of vehicle routing can be seen as a variant of the classical Traveling Salesperson Problem (TSP). In the area of evolutionary computation, the traveling thief problem (TTP) has gained increasing interest over the last 5 years. In this paper, we investigate the effect of weights on such problems, in the sense that the cost of traveling increases with respect to the weights of nodes already visited during a tour. This provides abstractions of important TSP variants such as the Traveling Thief Problem and time dependent TSP variants, and allows to study precisely the increase in difficulty caused by weight dependence. We provide a 3.59-approximation for this weight dependent version of TSP with metric distances and bounded positive weights.
%Although weight zero on all nodes results in the well-approximable original %metric TSP, weight zero on just some nodes yields instances to which our %approximation is not applicable.
Furthermore, we conduct experimental investigations for simple randomized local search with classical mutation operators and two variants of the state-of-the-art evolutionary algorithm EAX adapted to the weighted TSP. Our results show the impact of the node weights on the position of the nodes in the resulting tour.

% Furthermore, we conduct a series of experiments and gain insights into the difference of quality of solutions calculated by the state-of-the-art evolutionary algorithm EAX and its adaptation to the weighted TSP. Surprisingly, it turns out that performance differences between the vanilla version and the adapted version of EAX decrease with growing instance size.

% Furthermore, we provide experimental insights into how weights are affecting the quality of the solution by using a state-of-the-art heuristic approach for the TSP and its adaptation to the weight dependent TSP.

% \jakob{IMHO können wir das nicht behaupten, da die Experimente es nicht implizieren.}

% (wäre schön wenn wir sowas behaupten könnten) These experimental results indicate that instances containing some nodes of weight zero are indeed the most challenging.

% (falls man keinen Unterschied bemerkt) These experimental results do not indicate a difference in difficulty for the theoretically more challenging case of some nodes of weight zero.

\end{abstract}

% \ignore{
% Outline:
% \begin{itemize}
%     \item Introduction
%     \item Problem formulation
%     \item Computational Complexity and Approximations
%     \begin{itemize}
%         \item Metric TSP (results based on MLP)
%         \item \{1,2\}-TSP based on TSP approximation
%     \end{itemize}
%     \item Analysis of Local Search
%     \item Experimental Investigations of EAX and LKH
%     \begin{itemize}
%         \item Performance difference for 4 combinations of problems/fitness functions
%         \item Morphing of instances from classical TSP to weighted TSP by increasing weights on cities $2, \ldots, n$.
%     \end{itemize}
% \end{itemize}
% }
\section{Introduction}
Evolutionary algorithms have been used for many complex optimisation problems, but it is very hard to understand the complexity of the considered problems as well as the performance of evolutionary algorithms dependent on important problem characteristics.
Complex optimization problems often involve several interacting components that determine the value of a solution. Often these problems are called multi-component problems~\cite{DBLP:books/sp/19/BonyadiM0019} and the goal is to compute an overall high quality solution which might be quite different from good solutions of the underlying silo problems.
The Traveling Thief Problem (TTP) has been introduced in ~\cite{DBLP:conf/cec/BonyadiMB13} as an example problem which combines two of the most well studied $\mathcal{NP}$-hard problems in combinatorial optimization, namely the Traveling Salesperson Problem (TSP) and the Knapsack Problem (KP).
The TTP searches for a TSP tour and a packing plan such that the overall benefit of the tour is maximized. Here, the overall benefit is given by the profit of the collected items minus the cost of the tour which takes into account that costs are increasing with the weight of the collected items. The problem has obtained significant attention in the evolutionary computation literature in recent years.
Different types of evolutionary and other heuristic approaches have been designed for the TTP~\cite{Faulkner2015,ElYafrani:2016:PVS:2908812.2908847,ElYafrani2018231,DBLP:conf/gecco/WuP0N18,DBLP:conf/seal/Wu0PN17} and various types of studies have been carried out to understand the interaction of the two underlying subproblems~\cite{DBLP:conf/gecco/WuPN16,DBLP:journals/soco/MeiLY16,Wagner2017ttpalgssel}. Furthermore, the TTP has been subject to various competitions over the last $5$ years using a benchmark set that combines popular classes of benchmarks for the TSP and KP~\cite{DBLP:conf/gecco/PolyakovskiyB0MN14}.

The goal of this paper is to study such interactions of having increasing weights for the TSP from a theoretical perspective. In the case of TTP, previous studies have focused on the theoretical investigations of the underlying packing problem when the tour is kept fixed. An exact approach based on dynamic programming and a fully polynomial time approximation scheme have been presented in \cite{DBLP:conf/algocloud/NeumannPSSW18}. These studies show that the underlying packing problem (although $\mathcal{NP}$-hard) is relatively easy to solve by these approaches. Dealing with the traveling salesperson part of TTP seems to be the harder problem. The traveling salesperson part of TTP involves traveling costs that are increasing with the weight of the items collected. Motivated by the TTP, we study a variant of the TSP, which we call node weight dependent TSP (\wdtsps), where there are additional weights on the nodes and where the cost of a tour increases with the weight of already visited nodes. Its increased difficulty in comparison to classical TSP shows that TTP is not just more difficult than TSP because of the added knapsack problem. \wdtsps{} highlights the challenges of solving TTP that do not originate from packing decisions. Aside from these investigative purposes, \wdtsps{} is a natural model for many practical applications. For example, it can be used to model a vehicle's loaded weight on its gas consumption, e.g., in case of flights, liner ship movements or waste disposal.

The \wdtsps{} is related to other variants of the TSP. Both definitions of the time dependent TSP (TDTSP) also consider change of costs with respect to the already traveled tour. In \cite{tdtsp_origin}, the authors consider traveling costs that depend on the position in the tour (TDTSP1), a different version (TDTSP2) studied in~\cite{wrong_tdtsp} defines cost dependence with respect to the distance traveled. The very general form of time dependence in both versions results in optimization problems that are very hard to analyze, hence there are few known positive results. Most studies on the TDTSP1 focus on exact algorithms based on integer linear programming, e.g.~\cite{tdtsp_newer_ilp}. For the restriction to distances in $\{1,2\}$,  a $(2-{2}/{3n})$-approximation for the TDTSP1 has been presented in~\cite{1-2-approx}. Furthermore, for TDTSP2 a genetic algorithm approach has been studied in~\cite{tdtsp_genetic}.

Our more restrictive set of weights for \wdtsps{} turns out to be much closer related to the so-called minimum latency problem (MLP). This problem was introduced in~\cite{Blum_latency} to model a cost function for tours from a customer perspective. Formally, the objective is to minimize the average distance from a given start city to all other cities in a TSP tour. At first glance, latency may seem very different from weight dependence, but we will see that these two additional dependencies on the cost of a TSP tour have useful similarities. We pay particular attention to the connection of the classical TSP and the MLP to the \wdtsps{}. Our goal is to examine what types of successful algorithmic approaches for the TSP can be translated to the \wdtsps{}.

\subsection{Our contribution}
We study the \wdtsps{} from a theoretical perspective and are particularly interested in how the problem changes dependent on the weights that are part of the input.
We start by introducing approximation algorithms that make use of known methods for the MLP. Particularly, we investigate the restriction to metric distances and show that if there is an $\alpha$-approximation for MLP then there is a $3\alpha$-approximation for \wdtsps{} restricted to all weights equal to $1$. By adapting the techniques used to approximate the MLP and further structural properties, we derive a 3.59-approximation for metric \wdtsps{} with bounded integer weights.
Afterwards, we investigate the quality of approximations for the \{1,2\}-TSP and show how this translates into a 1.75-approximation for \{1,2\}-\wdtsps{} when all weights are $1$.

Our theoretical investigations are complemented by experimental investigations that systematically investigate the performance of randomized local search using different mutation operators. Furthermore, we investigate the high performing evolutionary algorithm EAX for the classical TSP and its adaptation to the \wdtsps{}. We study the effect of increasing weights for the \wdtsps{} and point out differences that occur in the tours for the TSP and the \wdtsps{} when using EAX on these two problems. For randomized local search, we observe that the inversion operator is preferred over jump operations although even symmetric TSP instances lead to non symmetric instances for \wdtsps{}.
For EAX, the results for $n=50$ cities show that the performance when using the best TSP tour computed by EAX and the best tour for \wdtsps{} might differ by a factor of up to $2.75$ in terms of quality for \wdtsps. For the considered instances having 1000 nodes, we regularly observe a difference by a factor of $1.75$.
%\jakob{Clearly state what we did experimentally.}

The paper is structured as follows. In Section~\ref{sec2}, we formally introduce the weighted Traveling Salesperson Problem. In Section~\ref{sec3}, we provide theoretical approximation guarantees for the metric case and provide improved results for the case where TSP costs are $1$ or $2$ in Section~\ref{sec4}. In Section~\ref{sec5}, we study experimentally different mutation operators for randomized local search as well as the difference of the quality of solutions for the classical TSP and \wdtsp{} obtained by EAX.
Finally, we conclude the paper and point out several promising future research directions.

%\todo{more precise/details here?}
\section{Problem Formulation}
\label{sec2}
We consider the symmetric TSP where cities have additional weights. The cost traveled along an edge depends on the weight of the cities visited so far and the distance of the edge. Let $\pi=(\pi_1, \ldots \pi_n)$ be a permutation of the $n$ cities. We assume that we always start at city $1$ and for the evaluation of a permutation, i.e. we have $\pi_1=1$. If this is not the case, we simply rotate the permutation prior to the fitness evaluation such that city $1$ is the first city in the permutation.

For distance function $d$ and weight function $w$ on a set of $n$ cities, we aim to find a permutation $\pi$ that minimizes the weighted TSP cost, denoted by $\wtour(\pi)$, formally given by the expression
\[
d(\pi_n, \pi_{1})\left(\sum_{j=1}^n w(\pi_j) \right)+ \sum_{i=1}^{n-1}  d(\pi_i, \pi_{i+1})\left(\sum_{j=1}^i w(\pi_j) \right).
\]

We call this optimization problem the node weight dependent TSP (\wdtsps). Note that the standard (unweighted) TSP is the special case where $w(\pi_1)=1$ and $w(\pi_i)=0$, $2 \leq i \leq n$. Generally, the distance $d(\pi_i, \pi_{i+1})$ is multiplied by the weight at city $\pi_i$, which we abbreviate with $\wcity(i)=\sum_{j=1}^i w(\pi_j)$.

To analyze the properties of \wdtsps, we consider the following variants. With \uwdtsp{} (\uwdtsps) we refer to the restriction that each city (except for the fixed start city) has the same weight, formally, $w_i=a$, $2\leq i \leq n$, for some fixed value $a\geq0$. Also, \onewdtsp{}  (\onewdtsps) denotes the further restriction to $a=1$.

This formal definition of the \wdtsps{} and its variations relates to known variations of the TSP as follows.
Time dependence as defined in~\cite{tdtsp_origin}, considers a collection of distance values $d_{i,j,\ell}$, $1\leq i,j\leq n$, $1\leq \ell <n$ with the interpretation that the cost of traveling from city $i$ to city $j$ in a tour where $i$ is the $\ell$th city to be visited is $d_{i,j,\ell}$. \uwdtsps{} with unit weight $a>0$ can be modeled by such a time dependent formulation by setting $d_{i,j,\ell}=d(i,j)(n-\ell+1)a$. For the general \wdtsps{} however, the cost of a transition in the weight dependent TSP does not only depend on its position in the tour, but also on the cities that were previously visited (their respective weights to be precise), hence a form of dependency that can not purely be modeled with in relation to the position in the tour.

 Time dependence as considered in~\cite{tdtsp_origin}, defines for the transition from~$i$ to~$j$, a cost that varies with respect to the time that has passed (i.e. the traveling cost) until the tour reaches city~$i$.
This version of time dependence is similar to our weight dependence in the sense that it also  models distance variation with respect to the partial tour traversed before reaching a city. With time dependence however, distance and time dependence are inherently entangled while weight dependence retains a stronger separation between distance and weight-effects.

\onewdtsps{} has an interesting relationship with the minimum latency problem (MLP) introduced in~\cite{Blum_latency}. We give a formal definition for the MLP and discuss the relevant known results in more detail in the next section.

\section{Approximation Algorithms}\label{sec3}
In this section we consider \wdtsps{} restricted to distances that satisfy the triangle inequality. We call this variant metric \wdtsps{}. Without this restriction, the \wdtsps, like most variants of the TSP, can not be approximated within any constant factor; this immediately follows from the standard reduction from the $\mathcal{NP}$-hard Hamiltonian cycle problem.

Aside from these complexity theoretic reasons, restriction to metric distances is a standard assumption for the TSP. Sometimes, triangle inequality is also indirectly implied by the objective of finding a shortest tour that visits each city at least once,  first introduced by~\cite{graphical_tsp} as \emph{graphical} TSP.

\subsection{Connections to the Minimum Latency Problem}
We explore the connection between the \onewdtsps{} and the MLP. %This allows to give a constant factor approximation for \onewdtsps.
Formally, the \latency{} (also called \emph{delivery-man},  \emph{school-bus driver}, or   \emph{traveling repairman problem}) models the task to find, for a given set of cities with distance function $d$ and a fixed start city $p$, a path starting at $p$ which visits all cities and minimizes the sum of waiting times. Formally, with a solution again modeled as a permutation  $\pi=(\pi_1,\dots,\pi_n)$ with $p=\pi_1$, MLP minimizes
%\jakob{What is $p_i$? I guess you mean $\pi_i$ or simply $i$ in the %following formula.}\frank{changed, Katrin please check that $\lcity(\pi_i)$ %is correct}

\[
\ltour(\pi)=\sum_{i=2}^n \lcity(\pi_i)=\sum_{i=2}^n\sum_{j=1}^{i-1} d(\pi_j,\pi_{j+1})
\]

The shorthand $\lcity(i)$ describes the \emph{latency} of city $\pi_i$  as it models the distance passed until city $\pi_i$ is reached. Although the MLP asks for a path and not a round-trip, it is possible to relate it to \onewdtsps{} in case of metric distances.

For \onewdtsps, the cost of a permutation $\pi=(\pi_1,\dots, \pi_n)$ can be rewritten to
\[
\wtour(\pi)=nd(\pi_n,\pi_1)+\sum_{i=1}^{n-1} id(\pi_i,\pi_{i+1}).
\]
Rewriting the summation to compute the latency of a permutation yields the following connection to \onewdtsps:
\begin{align*}\ltour(\pi)&=\sum_{i=1}^{n-1} (n-i)d(\pi_i,\pi_{i+1}) =\sum_{j=1}^{n-1} jd(\pi_{n-j},\pi_{n-j+1})\\&=\wtour(\pi_n,\pi_{n-1},\dots,\pi_1)-nd(\pi_1,\pi_n)
\end{align*}
Reversing the order of a permutation reveals a strong connection between latency and \wdtsps. As already observed by~\cite{Blum_latency}, an additional shift to start both tours at the fixed start city $\pi_1$ it follows that the cost of a node weighted TSP tour can be interpreted as the sum of a reversed latency tour and a classic TSP tour, formally:
\[\ltour(\pi)=\wtour(\pi_1,\pi_n,\dots,\pi_2)-\sum_{i=1}^{n-1} d(\pi_i,\pi_{i+1}) - d(\pi_1,\pi_n)\]
Since all distances are non-negative, this relation shows that the optimum value for MLP gives a lower bound for the optimum for \onewdtsps. Further, triangle inequality implies $d(\pi_1,\pi_n)\leq \sum_{i=1}^{n-1} d(\pi_i,\pi_{i+1})$, which together with the rough bound  $\ltour(\pi)\geq \sum_{i=1}^{n-1} d(\pi_i,\pi_{i+1})$ yields:
\[\wtour(\pi_1,\pi_n,\dots,\pi_2)\leq \ltour(\pi)+2\sum_{i=1}^{n-1} d(\pi_i,\pi_{i+1})\leq 3 \ltour(\pi) \]
In the worst case, this relation is tight as seen by the example below, where $\ltour(1,2,3)=x+2\varepsilon$ and $\wtour(1,3,2)=x+2x+2\varepsilon$ which for large $x$ and small $\varepsilon$ yields the worst-case factor of~3 between \onewdtsps{} and MLP (observe that the respective permutations are optimum solutions).
\begin{center}
\begin{tikzpicture}
\node[draw, circle, inner sep=2pt] (a) at (0,0) {$1$};
\node[draw, circle, inner sep=2pt] (b) at (1,0) {$2$};
\node[draw, circle, inner sep=2pt] (c) at (5,0) {$3$};
\draw[-] (a) to node[above] {$\varepsilon$} (b);
\draw[-] (b) to node[above] {$x$} (c);
\draw[-,bend right=20] (a) to node[below] {$x$} (c);
\end{tikzpicture}
\end{center}
In general this connection hence only yields that any $\alpha$-approximation for metric MLP can be used to approximate metric \onewdtsps{}with  ratio $3\alpha$. A direct application of the techniques used for \latencys{} allows to derive better approximation results for \onewdtsps.

Over the past 25 years, approximation algorithms for \latencys{} have been gradually improved from the initial~144-approximation in~\cite{Blum_latency} to the currently best 3.59-approximation in~\cite{latency_best_apx}. All such approximations have the same underlying idea of appending a certain set of tours starting and ending at the fixed start city. These tours are approximate solutions to $k$-MST, the problem of finding a minimum cost tree spanning $k$ vertices which is an obvious lower bound on the latency of the $k$-th vertex in an optimal \latencys{} tour. These constructions hence always calculate with the cost of a tour that returns to the start, so they can also be interpreted as a solution to \onewdtsps. The basic idea of our following approximation is to alter the procedure that picks the approximate solutions to the $k$-MST problem according to the objective of \onewdtsps. The formal description with technical details of this idea are given in the proof below.
%\todo{change to "are in appendix" if we run out of space}

%With adjusted costs for the auxiliary graph $H$ to map the weighted tour cost, and a better choice for the last tour that is appended, this technique can be used to show the following.
%Looking closer, some adjustments to the best approximation for \latencys yields an approximation algorithm with the same ratio of~3.59 for \onewdtsps.

\begin{theorem}\label{3-apx}
Metric \onewdtsp{} can be approximated within a ratio of at most~3.59 in polynomial time.
\end{theorem}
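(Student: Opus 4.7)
The plan is to repurpose the state-of-the-art approximation for the \latencys{}. All known MLP algorithms, and in particular the $3.59$-approximation of~\cite{latency_best_apx}, construct the output tour by concatenating rooted approximate $k$-MST tours for a carefully selected sequence of $k$-values. Since each such tour closes back at the root, the resulting permutation is already a closed tour and hence directly a feasible \onewdtsps{} solution. I will run the same underlying algorithm but adapt both the order in which the $k$-MST tours are concatenated and the analysis so that the $\wtour$ objective plays the role of the latency objective.

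My first step is to derive the \onewdtsps{} analog of the $k$-MST lower bound used for MLP. Using the rewriting $\wtour(\pi) = \sum_{k=1}^n S_k(\pi)$, where $S_k(\pi)$ is the tail distance from position $k$ back to the root through the remaining cities, the tail $S_k(\pi^*)$ is a walk through the specific $n-k+2$ cities $\pi^*_k,\pi^*_{k+1},\dots,\pi^*_n,\pi^*_1$. Via the standard doubling relation between rooted round trips and $k$-MSTs, one obtains $S_k(\pi^*)\geq \tau^{(n-k+2)}/2$, where $\tau^{r}$ denotes the minimum cost of a rooted round trip through any $r$ cities. Summing yields $\wtour^* \geq \tfrac12\sum_{r=2}^{n+1}\tau^{r}$, which is the same $k$-MST based quantity (re-indexed from the end of the tour rather than the beginning) that controls the MLP lower bound.

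The second step is the algorithmic adaptation. In the MLP algorithm, the approximate $k_i$-MST tours $R_1\subseteq R_2\subseteq\cdots\subseteq R_m$ are appended in order of increasing $k_i$, so that vertices covered by the small tours appear early and contribute small latency. Since in $\wtour(\pi)=\sum_i i\,e_i$ the large multipliers sit at \emph{late} positions, the natural modification is to reverse the concatenation: the smallest ring $R_1$ is placed at the end of the output tour (adjacent to the closing return to the root), and each larger ring $R_{i+1}\setminus R_i$ is stitched in before it via a shortcut provided by the triangle inequality. With this reversed construction, a position $p$ falling in the $i$-th innermost ring has tail $S_p(\pi)$ bounded by the sum $c_{k_1}+\cdots+c_{k_i}$ of the inner round-trip costs, which by the geometric schedule collapses to $O(\alpha\,\tau^{k_i})$, where $\alpha$ is the approximation ratio of the underlying $k$-MST subroutine.

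Substituting these tail bounds into $\wtour(\pi)=\sum_p S_p(\pi)$ produces, after reindexing, exactly the weighted $k$-MST sum that the MLP analysis bounds against $\sum_k\tau^{k}$. Applying the same randomised geometric-schedule averaging that underlies the $3.59$ result of~\cite{latency_best_apx} therefore gives the same ratio against the lower bound above, proving the theorem. The main obstacle I anticipate is verifying carefully that the reversed concatenation still produces a Hamiltonian closed tour with no double-visits introduced by the shortcutting between rings, and that the probabilistic $k$-selection argument transfers unchanged to this dualised view; both should follow from the symmetry $d(u,v)=d(v,u)$ of the metric and the triangle inequality, but require explicit verification to conclude a clean $3.59$ bound rather than a weaker constant.
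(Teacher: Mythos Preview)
Your overall strategy is essentially the paper's: run the \latencys{} concatenation and read it backwards so that small $k$-tours sit at the tail. Your tail-sum identity $\wtour(\pi)=\sum_{k}S_k(\pi)$ is, after reversal, exactly the relation $\wtour(\pi_1,\pi_n,\dots,\pi_2)=\ltour(\pi)+\text{TSP}(\pi)$ that the paper exploits. The gap is your claim that the \latencys{} analysis transfers \emph{unchanged} after reindexing. Your sum contains the term $S_1(\pi)$, the full tour length, which has no analogue in $\ltour(\pi)=\sum_{m\geq 2}\ell_\pi(m)$: in the dual picture the root corresponds to a vertex of latency zero, so the per-vertex $3.59$ argument says nothing about it. Concretely, the reversed MLP output has \onewdtsps{} cost $\ltour(\pi_{\mathrm{ALG}})+\text{TSP}(\pi_{\mathrm{ALG}})$, and the \latencys{} guarantee bounds only the first summand; your lower bound $\frac12\sum_r\tau^r$ likewise acquires an extra $\tau^n$ term that must be matched separately against $\text{TSP}(\pi_{\mathrm{ALG}})$.

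The paper fills precisely this hole. It replaces the last appended tour by a Christofides tour and then computes, under the same randomized geometric schedule with $c=3.59$, that the expected total length of all concatenated subtours is at most $\bigl(1.5+2/\ln c\bigr)\cdot\mathrm{OPT}_{\mathrm{TSP}}<3.07\cdot\mathrm{OPT}_{\mathrm{TSP}}$. Writing $\wtour^*=\ltour(\sigma^*)+\text{TSP}(\sigma^*)$ for the reversed optimum $\sigma^*$, one then combines $\ltour(\pi_{\mathrm{ALG}})\leq 3.59\,\ltour(\sigma^*)$ with $\text{TSP}(\pi_{\mathrm{ALG}})\leq 3.07\,\text{TSP}(\sigma^*)$ to conclude $\wtour(\pi'_{\mathrm{ALG}})\leq 3.59\,\wtour^*$. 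The obstacles you flag---Hamiltonicity after shortcutting and transfer of the randomized $k$-selection---are routine; the missing ingredient is this separate tour-length calculation, without which your argument does not close at $3.59$.
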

\begin{proof}
We adapt the strategy in~\cite{latency_best_apx} as follows. Consider a given metric instance of \onewdtsps{}  with distance $d$ on $n$ cities. First assume that we have tours $T_k$ that are a 2-approximation to the $k$-MST problem, for each $1\leq k\leq n$; which we will also refer to as \emph{good $k$-tours}. As a first difference to the approximation for latency, in this set of good $k$-tours, we construct as $n$-tour a 1.5-approximate solution to TSP, calculated from the algorithm of~\cite{christofides}, to have a certain  approximation ratio for the last tour.\par
As already mentioned, the final solution is constructed by appending a subset of the good $k$-tours. To find a good sequence to build this final solution, we also create an auxiliary graph $H$ that contains a node for each good tour $T_k$ and weights on directed arcs that reflect the cost produced by appending these tours in a solution, and search for a shortest path from $T_1$ to $T_n$. To now reflect the cost of the \onewdtsps{} instead of the \latencys, we change the cost of a path from the node corresponding to $T_i$ to the node corresponding to $T_j$ for any $1\leq i<j\leq n$ in the weighted graph $H$ to $(n-\frac{i+j}{2}+1)c(T_j)$, where $c(T_j)$ denotes the cost of the tour $T_j$. This additional cost of $c(T_j)$ compared to the construction used for the latency problem gives exactly the cost of latency plus TSP tour, hence the cost for the \onewdtsps.\par
%then a shortest path in $H$ corresponds to the weight of the reversed tour, i.e., latency plus TSP-cost. \todo{add some details of the computation here..}
%
Consider, like in the original approach, appending the following set of subtours. For some $c>1$ let $T_{n_i}$ be the good tour of length at most $2bc^i$ that contains the largest number of vertices, with  $b$  set to be $c^U$, for a random variable $U$ uniformly distributed between 0 and 1.
Append these tours $T_{n_i}$ for $i=1,2,\dots$ in this order for all values of $i$ for which $2bc^i$ is strictly smaller than $c(T_n)$, then append $T_n$ as the last tour. Metric shortcuts of this yields a valid solution to \onewdtsps. With $c=3.59$, the bounds on the latency of each city in the resulting tour remains~3.59 with exactly the calculations as presented in \cite{latency_best_apx}. For the additional TSP-cost, we claim that the constructed tour is at most 3.07 times as long as an optimal TSP tour. Let $j$ and $d$ be such that $d\leq 1$ and that the cost of an optimal TSP tour is $dc^j$. Regardless of the value of $b$, the last tour appended by the algorithm is the $n$-path created by  Christofides' algorithm, so it has a cost of at most $1.5dc^j$. Similar to the computations for the latencies, the other appended tours depend on the relation between $d$ and $b$. \par
If $d<b$, the algorithm appends (aside from the $n$-tour), $j-1$ tours, up to cost $2bc^{j-1}$, with combined cost of at most $$2\sum_{\ell=1}^{j-1}bc^\ell < \frac{2bc^j}{c-1}.$$
If $d>b$, the algorithm appends $j$ tours, up to cost $2bc^j$, with combined cost
$$2\sum_{\ell =1}^j bc^\ell<\frac{2bc^{j+1}}{c-1}.$$
With expectation over $U$, the expected length of the tour up to $T_n$ is at most:
\[\int_{\log_c d}^1   \frac{2c^Uc^j}{c-1} \ \mathrm{d}U + \int_0^{\log_c d}\frac{2c^Uc^{j+1}}{c-1} \ \mathrm{d}U=\frac{2dc^j}{\ln c}\]
Overall, the expected ratio between the constructed tour and an optimal TSP tour is at most $1.5+\frac{2}{\ln c}<3.07$.\par
At last, the primal-dual procedure described in~\cite{latency_best_apx} only gives a set of good $k$-tours for a subset of $\{1,\dots,n\}$, not for the whole set as we assumed in the beginning. Exactly as shown for the latency problem, the tours for the missing values of $k$ can be replaced by phantom tours which then are replaced by existing ones since our distance function shows the same behaviour as the original one with respect to the interpolation used for the phantom tours.
\end{proof}

\subsection{Bounded Integer Weights}
So far, we only derived approximation results for the \wdtsps{} for the restriction to all weights equal to~1 with the help of the \latencys. While there exist generalizations of \latencys, there are no known approximation results which translate to \wdtsps.
In~\cite{repairmen}, a variation of the \latencys{} with a service time at each city which adds to the latency of the following city has been investigated. Considering the reverse tour, this is very different from \wdtsps. Weights other than~1 in \wdtsp{} in a reverse tour, can be interpreted as the importance of a city, given as multiplicative factor on the penalty of its waiting cost. To the best of our knowledge, this generalization of the \latencys{} has not been studied.\par
In order to generalize the approximation for \onewdtsps{} to different weights, we exploit structural properties of optimal solutions. This approach yields the following result.
\begin{lemma}
For any $\alpha>1$, an $\alpha$-approximation for metric \onewdtsp{} can be used to derive an $\alpha$-approximation for metric \wdtsp{} with polynomially bounded, non-zero, integer weights.
\end{lemma}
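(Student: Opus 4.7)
The plan is to reduce the \wdtsps{} instance $I$ with polynomially bounded positive integer weights $w_1,\dots,w_n$ to a \onewdtsps{} instance $I'$ by node duplication, and then invoke the assumed $\alpha$-approximation as a black box. Concretely, I would replace each city $i$ by $w_i$ indistinguishable duplicates $i_1,\dots,i_{w_i}$, declare $d'(i_k,i_\ell)=0$ for two duplicates of the same city and $d'(i_k,j_\ell)=d(i,j)$ across distinct cities. The triangle inequality transfers to $d'$, and since the $w_i$ are polynomially bounded the total number of nodes $N=\sum_i w_i$ is polynomial in $n$, so $I'$ is a polynomial-sized metric \onewdtsps{} instance.

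The first step of the analysis links the two optima. Every \wdtsps{} tour $\pi=(\pi_1,\dots,\pi_n)$ of $I$ induces a \onewdtsps{} tour $\sigma_\pi$ of $I'$ that visits the $w_{\pi_i}$ duplicates of $\pi_i$ in a single consecutive block, in the order prescribed by $\pi$. Intra-block edges have $d'$-distance zero and contribute nothing, while the inter-block edge leaving the block of $\pi_i$ sits at position $\omega(i)=\sum_{j\le i} w_{\pi_j}$ in $\sigma_\pi$, so its \onewdtsps{} contribution is exactly $\omega(i)\,d(\pi_i,\pi_{i+1})$, matching the \wdtsps{} contribution; the wraparound is handled analogously. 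Hence $\wtour(\sigma_\pi)=\wtour(\pi)$, so the \onewdtsps{} optimum on $I'$ is at most the \wdtsps{} optimum on $I$. Running the assumed $\alpha$-approximation on $I'$ therefore yields a tour $\tau$ with $\wtour(\tau)\le\alpha\cdot\wtour(\pi^*)$, where $\pi^*$ is an optimal \wdtsps{} tour of $I$.

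The crucial structural step is to convert $\tau$ back into a \wdtsps{} tour without inflating its cost. Here I would invoke an exchange argument to show that $\tau$ can be rearranged into a \onewdtsps{} tour $\tau'$ of $I'$ whose duplicates of every city form a contiguous block and whose cost satisfies $\wtour(\tau')\le\wtour(\tau)$. The rearrangement picks any city $i$ whose duplicates are not all adjacent in the current tour, identifies two such duplicates, and slides them together by local swaps; the zero intra-city distance collapses one inter-city edge to a zero-weight edge, and the triangle inequality together with the non-negativity of distances bounds the position-weighted shifts of the remaining edges by this saving. Once $\tau'$ has the block structure, listing one representative per block yields a \wdtsps{} tour $\pi$ of $I$ with $\wtour(\pi)=\wtour(\tau')\le\wtour(\tau)\le\alpha\cdot\wtour(\pi^*)$, the desired $\alpha$-approximation.

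The main obstacle is precisely this exchange step: because the \onewdtsps{} objective multiplies each edge by its tour position, sliding a duplicate shifts the multipliers of every subsequent edge, so the accounting is subtler than in plain TSP exchange arguments. One must show that the saving obtained by contracting an inter-city edge between two duplicates of the same city into a zero intra-city edge always dominates these positional shifts, and particular care is required for the wraparound edge and for duplicates of the fixed start city, where interleaving can otherwise shave off wraparound cost. The polynomial upper bound on the weights keeps $N$ polynomial and makes the induction on the number of cities with non-contiguous duplicates finite of polynomial depth, so the whole reduction runs in polynomial time.
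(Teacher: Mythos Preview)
Your reduction by node duplication is exactly the paper's construction, and your forward direction (that $\wtour(\sigma_\pi)=\wtour(\pi)$, hence $\mathrm{OPT}(I')\le\mathrm{OPT}(I)$) matches the paper verbatim. The only substantive difference is the rearrangement step, where you propose ``sliding duplicates together by local swaps'' and then worry that the collapsed zero-edge must \emph{dominate} the induced positional shifts. The paper avoids this accounting entirely by a single clean move: for a city $i$ whose copies are scattered, take the copy $i_x$ that appears \emph{last} in the tour and relocate all other copies to immediately after $i_x$. Then (i) every edge after $i_x$ is unchanged in both endpoints and cumulative weight; (ii) every surviving edge before $i_x$ has its position multiplier weakly decreased, since some copies that used to precede it have been postponed; (iii) each shortcut edge created by deleting a copy at position $p$ is bounded via the triangle inequality by the two old edges through that copy, at multipliers $p-1$ and $p$, while the shortcut sits at multiplier at most $p-1$. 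All three effects are individually non-positive, so no domination argument is needed---your framing works only if the move is always in the forward direction, and adjacent ``local swaps'' can in fact increase cost along the way. Your concern about the start city is legitimate (the paper restricts the argument to $i>1$ and is terse about city~$1$), but it is handled by the same mechanism: non-start copies of city~$1$ can be pushed forward to the end of the tour, where the wraparound edge to $1_1$ has distance zero.
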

\begin{proof}
Consider an instance of \wdtsps{}  with polynomially bounded non-negative integer weights given by distances $d$ and weights $w$ on $n$ cities. Create an instance of \onewdtsps{} by including  $w(i)$ copies of city $i$, for each city $i\in\{1,\dots,n\}$. Denote for the new instance formally the set of cities by $\{\{i_1,\dots,i_{w(i)}\}\colon 1\leq i\leq n\}$. We further define the distances $\smash{\hat d}$ for the transformed instances by \[\hat d(i_r,j_s)=
\begin{cases} 0, & \text{if } i=j \\
 d(i,j), & \text{else}\end{cases}\]
Observe that this definition yields a metric distance. Further, since the weights are polynomially bounded, this construction is polynomial. Denote by $r$ the number of cities in this new instance, and assign the weight~1 to each of these cities.\par
First observe that any permutation $(\pi_1,\dots, \pi_n)$ for the original instance, can be translated to a permutation of the same weighted cost for the new instance, by replacing $\pi_j$ with $\pi_j=i$ by the sequence $i_1,\dots,i_{w(i)}$. In particular, the optimum value for the new instance is smaller or equal to the optimum value of the original one.\par
Conversely, we can use a permutation of the new instance to create a permutation of the same or even smaller cost for the original instance as follows. Let $(\pi_1,\dots, \pi_r)$ a permutation for the new instance. We claim that this permutation can be altered such that all copies of an original city occur consecutively together which allows to extract a permutation to the original instance by replacing the grouped copies by the single original city. Assume that  for some $1< i\leq n$, the cities $i_1,\dots,i_{w(i)}$ do not occur (in some arbitrary order) as one consecutive block in the sequence $(\pi_1,\dots, \pi_r)$. Let $1\leq x\leq w(i)$ be such that $i_x$ occurs last, among all cities in $\{i_1,\dots,i_{w(i)}\}$, in the sequence $(\pi_1,\dots, \pi_r)$. Consider altering the sequence, by moving all cities in $\{i_1,\dots,i_{w(i)}\}\setminus\{i_x\}$ to be visited right after $i_x$. All edges after $i_x$ have the exact same cost, since neither the weight nor the cities have changed. All edges among the cities in $\{i_1,\dots,i_{w(i)}\}$ are zero, so they add no cost at all. All edges before the block $\{i_1,\dots,i_{w(i)}\}$ now are attached with equal or less weight than before, since the weight of the shifted cities is postponed. Triangle inequality allows to estimate the new transitions created by moving  by the cost of the previously two edges traveling through cities in $\{i_1,\dots,i_{w(i)}\}$. Repeating this procedure yields a permutation that can be translated to the original instance and has the same or smaller cost.\par
Overall, it follows that an $\alpha$-approximate solution for metric \wdtsps{} with bounded non-negative integer weights can be computed by creating the new instance of metric \onewdtsps{}, running the assumed $\alpha$-approximation on it, and then translating the resulting permutation to the original instance (this can be done in linear time by simply scanning the permutation in reverse and skipping duplicates).
%So there exist $i_x$ and $i_y$ with $1\leq x,y\leq w(i)$ such that $\pi_x$
\end{proof}
Combined with Theorem~\ref{3-apx}, this result gives the following.
\begin{theorem}
Metric \wdtsp{} with polynomially bounded, non-zero, integer weights can be approximated within a ratio of at most~3.59 in polynomial time.
\end{theorem}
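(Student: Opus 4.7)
The plan is essentially to observe that this theorem is an immediate corollary of the two preceding results: Theorem~\ref{3-apx} supplies a polynomial-time $3.59$-approximation for metric \onewdtsps{}, and the lemma just before this theorem lifts any $\alpha$-approximation for metric \onewdtsps{} to an $\alpha$-approximation for metric \wdtsps{} with polynomially bounded nonzero integer weights. So the proof reduces to composing these two pieces and verifying that the composition is polynomial-time.

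Concretely, first I would invoke the reduction from the lemma: given a metric \wdtsps{} instance on $n$ cities with distance $d$ and weights $w$, build the auxiliary \onewdtsps{} instance on $r=\sum_{i=1}^n w(i)$ cities by replacing each city $i$ with $w(i)$ copies and using the metric $\hat d$ defined in the lemma's proof (zero distance among copies of the same original city, original distances otherwise). Since the weights are polynomially bounded, $r$ is polynomial in the input size, and $\hat d$ remains a metric as noted in the lemma.

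Next I would apply the algorithm of Theorem~\ref{3-apx} to this metric \onewdtsps{} instance, obtaining in polynomial time a permutation $\hat\pi$ of the $r$ copied cities with $\wtour(\hat\pi)\le 3.59\cdot\mathrm{OPT}_{\onewdtsps}(\hat d)$. Then I would apply the postprocessing from the lemma's proof: repeatedly group the copies of each original city together into a single block (which, by the lemma's argument, never increases the cost) and contract each block to the corresponding original city. This yields a permutation $\pi$ of the original cities satisfying $\wtour(\pi)\le\wtour(\hat\pi)$.

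Combining this with the lemma's other direction, which shows $\mathrm{OPT}_{\onewdtsps}(\hat d)\le\mathrm{OPT}_{\wdtsps}(d,w)$ via the lift of any original permutation to a copy-expanded permutation of identical cost, we obtain $\wtour(\pi)\le 3.59\cdot\mathrm{OPT}_{\wdtsps}(d,w)$, as required. The main thing to verify is just that every step runs in polynomial time, which is immediate since the construction and postprocessing are linear in $r$ and Theorem~\ref{3-apx} already guarantees polynomial runtime on instances of size $r$. There is no genuine obstacle here: the theorem is a direct corollary, and the substantive work has already been done in Theorem~\ref{3-apx} and the preceding lemma.
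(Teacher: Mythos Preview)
Your proposal is correct and matches the paper's approach exactly: the paper treats this theorem as an immediate corollary, simply stating that combining Theorem~\ref{3-apx} with the preceding lemma yields the result. Your write-up merely unpacks the composition (build the blown-up \onewdtsps{} instance, run the 3.59-approximation, contract copies back) and checks polynomial runtime, which is precisely the intended argument.
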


\section{1-Weighted TSP\{1,2\}}
\label{sec4}
We now consider the further restriction to distance values 1 and 2. For the classical TSP, this is one of the most studied restrictions, usually called  \{1,2\}-TSP, as this problem can be seen as a generalization of the Hamiltonian cycle problem and is therefore still $\mathcal{NP}$-hard. Different approximation algorithms have been developed for the \{1,2\}-TSP and we investigate how to make use of those when investigating \onewdtsps{} with distances 1 and 2. We refer to  this restriction by \onewdtsp{\{1,2\}}, \onewdtsps{\{1,2\}} for short.

A $(2-{2}/{3n})$-approximation algorithm for a restriction to distances 1 and 2 on the related time dependent TSP has been presented in~\cite{1-2-approx}. Although \onewdtsps{} is a special case of TDTSP1, this result can not be used to derive an equivalent approximation for \onewdtsps\{1,2\}, since edge-cost restriction for our problem does not translate to edge-cost restriction to 1 and 2 in the representation as TDTSP1; observe that the costs of 1 and 2 have to be multiplied by the weights, which, even with the restriction to all weights being~1, gives a range of time dependent distances between 1 and $2n$.

%\subsection{Approximation Algorithms based on Cycle Covers}
We first consider the case where the input allows for a TSP tour of cost $n$.
%Then a cycle cover of cost $n$ can be obtained using $2$-matching %algorithms.
%In order to deal with this case, it is crucial to see how many 2-edges are introduced at most into a tour and where they are located.
%We first consider the restriction .
% case where all weights are equal and w.l.o.g. we assume $w(i)=1$, $1 \leq i \leq n$.
First observe that for this case, an optimal tour for \onewdtsps{}  has cost
$\sum_{i=1}^n = n(n+1)/2.$
Let $k$ be the number of $2$-edges introduced into the tour by an approximate solution. The tour has the highest possible costs if these edges are at the end of the tour.
Compared to the optimal tour the cost increase by
\begin{eqnarray*}
& &n+(n-1)+ \ldots + (n-(k-1))\\
&=& kn -k(k-1)/2 = k(n-(k-1)/2)
\end{eqnarray*}

Let $\pi$ be an $\alpha=(1+c)$-approximation, $c\geq 0$ for the \{1,2\}-TSP. Let $k\leq cn$ be the number of  $2$-edges in $\pi$.
%(Note that our result holds for any tour that contains at most $cn$ $2$-edges as the value of an optimal solution can only be higher if it constraints at least one $2$-edge.)
The resulting approximation ratio for \onewdtsps{\{1,2\}} is at most
\begin{eqnarray*}
& & 1+ k(n-(k-1)/2)/ (n(n+1)/2)\\
&\leq & 1+ (2kn - (k^2-k)/2)/(n^2)
\end{eqnarray*}
Setting $k=cn$, we get
\begin{eqnarray*}
& & 1+ (2cn^2 - (c^2n^2-cn)/2)/(n^2)\\
&= & 1+(2c-(c^2-c/n)/2)\\
& = & 1+(2c-c^2/2) + o(1)
\end{eqnarray*}
Assume that we use the $7/6$-approximation for \{1,2\}-TSP given in \cite{DBLP:journals/mor/PapadimitriouY93}, then we have $c=1/6$ and therefore a $1+(2/6 - 1/72)+o(1) = 95/72 +o(1)$ approximation for the weight dependent TSP.

\ignore{
We can improve the result if we assume that all cycles in the two matching have length at least $r$. Then only every $r$-th edge in the TSP tour is of cost $2$ which changes the cost increase to

\begin{eqnarray*}
& & n+(n-r)+ \ldots + (n-r(k-1))\\
& = & kn -rk(k-1)/2 = k(n-r(k-1)/2)
%\leq (kn - rk^2/2)/(n^2)
\end{eqnarray*}

Consequently, the approximation ratio when using a tour that has at most $k=cn$ $2$-edges that appear only every $r$-th time in the tour is
\[1+ (cn^2 - rc^2n^2/2 +kr/2)/(n^2)= 1+(2c-rc^2/2) +o(1)\]
Using again $c=1/6$ and $r=3$, we get an approximation of $1+(2/6 - 3/72)+o(1)=93/72+o(1)$.

We can improve our results by considering the produced tour in addition in reverse order.
Let $\pi=(\pi_1, \ldots, \pi_n)$ be a tour with $k$ $2$-edges. We also consider the tour $\pi'=(\pi_1, \pi_n, \ldots, \pi_2)$. One of these two tours has at least $k/2$ edges of cost $2$ at positions $1, \ldots, n/2$. This implies that the cost of the weighted TSP tour is at most
\[n + kn - kn/2 - \sum_{i=1}^{k/2-1} i = kn/2 - k^2/4 + k/2\]
Hence, the approximation ratio is  upper bounded by
\[1+(cn^2/2 - (cn)^2/4 + cn/2)/(n^2/2) = 1+c - c^2/2 +o(1)\]
which gives an approximation of $1+ 1/6 - 1/72 +o(1) = 83/72+o(1)$ for $c=1/6$.
}
We can improve our results by considering the $\pi$ also in reverse order.
Formally, for $\pi=(\pi_1, \ldots, \pi_n)$ we also consider the tour $\pi'=(\pi_1, \pi_n, \ldots, \pi_2)$ .One of these two tours has at least $k/2$ edges of cost $2$ at positions $1, \ldots, n/2$ which gives, for the better of these tours, an addition to the optimum of at most
\[ kn/2 + kn/4- 2\sum_{i=1}^{k/2-1} i = kn/2 + kn/4- k^2/4 + k/2\]
With $k=cn$ the approximation ratio is hence bounded by
\begin{eqnarray*}
&&1+(cn^2/2 + cn^2/4 - (cn)^2/4 + cn/2)/(n^2/2)\\
&=& 1+1.5c - c^2/2 +o(1)
\end{eqnarray*}
For  $c=1/6$, this gives a ratio of $89/72+o(1)$.

%\subsection{General \{1,2\}-TSP}
We now extend these observations to the general case where the optimal solution can include edges of cost $2$.
Let $\pi^*$ be an optimal solution for the classical \{1,2\}-TSP of cost $OPT=n+u$. $\pi^*$ has exactly $u$ edges of cost $2$.
An $\alpha=(1+c)$-approximation algorithm for the \{1,2\}-TSP produces a tour $\pi$ of TSP-cost at most $(1+c)(n+u) = n + cn + (1+c)u$ which has at most $k=cn + (1+c)u$ edges of cost $2$. Note $k \leq n$.

A lower bound on the value of an optimal solution for the \onewdtsps{\{1,2\}} is obtained by assuming that the $u$ edges of the optimal TSP tour appear at the beginning of the weighted TSP tour. Hence, we can bound   the value of an optimal solution of \onewdtsps{\{1,2\}} by
\[\sum_{i=1}^{n} i + \sum_{i=1}^{u}i = n(n+1)/2 + u(u+1)/2\]
%\textcolor{red}{This is the new bound on the approximate tour}
We now estimate the weighted tour value of the approximate tour $\pi$ and its reversal $\pi'$ more precisely, recall that both tours contain at most $k=cn + (1+c)u\leq n$ edges of cost~2.
%Considering the tour $\pi$  and its reversal $\pi'$, we use the following relation.
Each edge of cost~2 that adds an addition of cost $n\leq r\leq 1$ to $\pi$ (addition compared to $n(n+1)/2$), adds a cost of $n-r+1$ to $\pi'$. Summing up, if all costs of edges of length~2 in $\pi$ cause an addition of $R$, then these edges produce an additional cost of $k(n+1)-R$ for $\pi'$. In the worst case, $R$ is equal to $k(n+1)/2$, which results in a worst-case cost of $(n(n+1)+k(n+1))/2$
for the better of the two options.

Compared to the above bound on the optimal solution, this results in an approximation ratio of at most:

%\textcolor{red}{Here ends this brief argument!!!}

%We assume that the $\alpha$-approximation places its $k=cn + (1+c)u\leq n$ edges at the end of the tour which gives an upper bound

%\begin{eqnarray*}
%\wtour(\pi^*) &  \leq & \sum_{i=1}^{n} i + \sum_{i=n-(k-1)}^{n}i\\
%& = & n(n+1)/2+ n + (n-1) - \ldots - (n-(k-1))\\
%& = & n(n+1)/2+ kn - k(k-1)/2\\
%& = & n(n+1)/2+ k(n-(k-1)/2)
%\end{eqnarray*}
%for the \onewdtsp{\{1,2\}}.
%Approximation ratio is therefore

\begin{eqnarray*}
%& & \frac{n(n+1)/2+ k(n-(k-1)/2)}{(n(n+1)/2 + u(u+1)/2)}\\
%&= &  \frac{n(n+1)/2+ (cn + (1+c)u)(n-((cn + (1+c)u)-1)/2)} {(n(n+1)/2 + u(u+1)/2)}\\
%&\leq & \frac{n(n+1)/2+ (cn + (1+c)u)n} {(n(n+1)/2 + u(u+1)/2)}\\
%&\leq & \frac{n(n+1)/2+ (cn +u+cu)n} {(n(n+1)/2 + u(u+1)/2)}\\
%& \leq & 1+ 2c + (1+c)un/(n^2+u^2)\\
%& \leq & \frac{n(n+1)/2+ kn/2}{(n(n+1)/2 + u(u+1)/2)}\\
%&\leq  & \frac{n(n+1)/2+ (cn + (1+c)u)n/2}{(n(n+1)/2 + %u(u+1)/2)}\\
& &\frac{n(n+1)+k(n+1)}{n(n+1) + u(u+1)}\\
&=  & \frac{n(n+1)+ (cn + (1+c)u)(n+1)}{n(n+1) + u(u+1)}\\
&=  & \frac{n(n+1)+ cn^2 +cn + (1+c)u + (1+c)un}{n(n+1) + u(u+1)}\\
& \leq & 1 + c + (1+c)/2 \\
& \leq & 1.5 \alpha
\end{eqnarray*}

where the last step uses that $\frac{un}{(n(n+1) + u(u+1))}$ is monotonically increasing in $u$ and attains its maximum for $u=n$. We summarize these results in the following theorem.

\begin{theorem}
Using an $\alpha$-approximation algorithm for \{1,2\}-TSP to compute a TSP tour $\pi$, $\pi$ or its reverse tour $\pi'$ is a $1.5 \alpha$-approximation for the \onewdtsp{\{1,2\}}.
\end{theorem}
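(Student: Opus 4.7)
The plan is to bound the cost of the better of $\pi$ and its reversal $\pi'$ against a carefully chosen lower bound on the \onewdtsps\{1,2\} optimum. First I would fix notation by letting an optimal classical \{1,2\}-TSP tour have cost $OPT_{TSP}=n+u$, so the optimum uses exactly $u$ edges of cost~2. An $\alpha$-approximation with $\alpha=1+c$ then delivers a tour $\pi$ of TSP-cost at most $(1+c)(n+u)$, meaning the number $k$ of $2$-edges in $\pi$ satisfies $k\leq cn+(1+c)u$, and in particular $k\leq n$.

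Next I would derive the lower bound on $OPT$ for \onewdtsps\{1,2\}. The contribution of the $i$-th edge of a tour is $i\cdot d(\pi_i,\pi_{i+1})$ (with the usual wrap-around for the closing edge). The best a permutation can do is to have every edge contribute at least its positional weight once (total $n(n+1)/2$) and to place the unavoidable $2$-edges (at least $u$ of them, since \{1,2\}-TSP already forces this) at the smallest positions $1,\dots,u$, contributing an extra $u(u+1)/2$. This gives the clean lower bound $n(n+1)/2+u(u+1)/2$.

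The crucial step is the reversal averaging. For $\pi=(\pi_1,\dots,\pi_n)$ and $\pi'=(\pi_1,\pi_n,\dots,\pi_2)$, an edge sitting at position $r$ in $\pi$ sits at position $n-r+1$ in $\pi'$ (the fixed start city $\pi_1$ makes the positional accounting symmetric). Hence, if the total excess of \onewdtsps-cost over $n(n+1)/2$ contributed by $2$-edges in $\pi$ is $R$, the corresponding excess in $\pi'$ is $k(n+1)-R$, and the better of the two has excess at most $k(n+1)/2$. So the chosen tour has \onewdtsps-cost at most $n(n+1)/2+k(n+1)/2=(n+k)(n+1)/2$.

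Finally I would take the ratio of the upper bound to the lower bound, substitute $k\leq cn+(1+c)u$, and simplify. The resulting rational function in $u$ is monotone increasing on $\{0,\dots,n\}$ because the $un$ cross-term in the numerator dominates the $u(u+1)$ growth in the denominator; I expect this monotonicity verification to be the only subtle point, since one must check it uniformly rather than only at the endpoints. Substituting the worst case $u=n$ then collapses the expression to $1+c+(1+c)/2=1.5(1+c)=1.5\alpha$, completing the bound.
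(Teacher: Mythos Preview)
Your setup, lower bound, and reversal-averaging step mirror the paper's argument exactly. The gap is in your final ratio analysis. After substituting $k\le cn+(1+c)u$, the ratio simplifies to
\[
\frac{(1+c)(n+u)(n+1)}{n(n+1)+u(u+1)},
\]
and this function is \emph{not} monotone increasing in $u$ on $[0,n]$: it equals $1+c$ at both $u=0$ and $u=n$ and peaks in between (near $u=(\sqrt{2}-1)n$). So ``worst case $u=n$'' does not give $1.5(1+c)$; it gives only $1+c$, and your monotonicity justification (``the $un$ cross-term dominates $u(u+1)$'') fails precisely because the numerator grows only linearly in $u$ while the denominator grows quadratically.

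The paper obtains the $1.5\alpha$ bound by a different decomposition: it splits the numerator as $(1+c)n(n+1)+(1+c)u(n+1)$ and bounds the two resulting fractions separately. The first,
\[
\frac{(1+c)n(n+1)}{n(n+1)+u(u+1)}\le 1+c,
\]
is tight at $u=0$. The second,
\[
\frac{(1+c)u(n+1)}{n(n+1)+u(u+1)},
\]
\emph{is} monotone increasing on $[0,n]$ (its derivative has sign $n(n+1)-u^2>0$), with maximum $(1+c)/2$ at $u=n$; this is the monotonicity the paper actually invokes. Summing gives $1.5(1+c)$. So the fix is not to evaluate the whole ratio at a single extremal $u$, but to bound the two additive pieces at different values of $u$.
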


Using the $(7/6)$-approximation for the \{1,2\}-TSP, we get a $1.75$-approximation for the \onewdtsps{\{1,2\}}.

\ignore{
Consider an $\alpha$-approximation algorithm for \{1,2\}-TSP.
We require
\begin{eqnarray*}
& & \alpha(1+\sqrt{\frac{2c^*}{1+c^*}}) = (1+c^*)\\
& \Leftrightarrow & \alpha = \frac{(1+c^*)^{3/2}}{\sqrt{1+3c^*}}
\end{eqnarray*}
in order to obtain a $(1+c^*)$-approximation for \onewdtsps{\{1,2\}}.
}

\ignore{
We consider $\pi'=(\pi_1^*, \ldots, \pi^*_n)$ together with its complement tour which $\pi^{*c} =(\pi^*_1, \pi^*_n, \ldots, \pi^*_2)$ which is obtained from $\pi^*$ be traversing the tour in the opposite direction. Oberserving that one of these two tours has at least $k/2$ edges of weight $2$ in the first half of the tour. Hence we have

\begin{eqnarray*}
& &\min\{\wtour(\pi^*), \wtour(\pi^{'})\}\\
&\leq & n(n+1)/2 +kn/4 +\sum_{i=n-(k/2-1)}^n i\\
&= & n(n+1)/2 +kn/4 +k/2(n-(k/2-1)/2)
\end{eqnarray*}

The approximation ratio is

\begin{eqnarray*}
& & \frac{n(n+1)/2 +kn/4 +k/2(n-(k/2-1)/2)}{n(n+1)/2 + u(u+1)/2}\\
%&\leq & \frac{n(n+1)/2 +(cn + (1+c)u)n/4}{n(n+1)/2 + %u(u+1)/2}\\
%& + & \frac{(cn + (1+c)u)/2(n-((cn + %(1+c)u)/2-1)/2)}{n(n+1)/2 + u(u+1)/2}
\frac{n(n+1)/2 +kn/4 +k/2(n-(k/2-1)/2)}{n(n+1)/2 + u(u+1)/2}\\
\end{eqnarray*}
}

\ignore{
\subsubsection{Optimal TSP tours with many $2$-edges}
As long as the number of $2$-edges in the tour obtained is $k=cn$, we obtain a $(1+(2c-c^2/2)+o(1))$-approximation ($1+(2c-rc^2/2)+o(1)$)-approximation if the $2$-edges appear only every $r$-th time).

We now consider the case where the number of $2$-edges in an optimal tour is $u$. This implies that the cost of an optimal TSP tour is $n+u$.
Assume that there is a $1+c$ approximation for this case. This implies that the value of such a solution is at most $(1+c)(n+u)=n+cn +(1+c)u$ and the number of $2$-edges is at most $cn +(1+c)u$.
The most beneficial situation is to have the $2$-edges at the begin of the tour and it's most expensive to have them directly at the end.

Therefore the optimal tour has weighted cost at least

\[ n(n+1)/2 + u(u+1)/2\]

Having at most $k= cn +(1+c)u$ $2$-edges the cost of the tour is at most

\[n(n+1)/2+ k(n-(k-1)/2)\]

This implies that the ratio is
%\begin{eqnarray*}
%& &(n(n+1)/2+ k(n-(k-1)/2)) / (n(n+1)/2 + u(u+1)/2)\\
%& = & (n(n+1)/2+ (cn +(1+c)u)(n-((cn +(1+c)u)-1)/2)) / (n(n+1)/2 + %u(u+1)/2)\\
%\end{eqnarray*}

}

% \section{Experimental investigations of heuristic search}
% \label{sec4}

% \subsection{Fixed starting point}
% Consider the case where there is a fixed starting point $p=0$ and a tour has to be computed that starts at $p$, visits all cities $1, \leq i \leq n$, and ends at $p$.

% $1,2$-TSP:
% \begin{itemize}
%     \item $1,2$-TSP where $w_i=1$, $1 \leq i \leq n$
%     \item $1,2$-TSP where $w_i \in \{1,2, \ldots, d\}$ (??)
%     \item $1,2$-TSP where $w_i$ are more general, especially including $w_i=0$ (??)
% \end{itemize}

% \subsection{Arbitrary starting point}

% $1,2$-TSP:
% \begin{itemize}
%     \item $1,2$-TSP where $w_i=1$, $1 \leq i \leq n$
%     \item $1,2$-TSP where $w_i \in \{1,2, \ldots, d\}$ (??)
%     \item $1,2$-TSP where $w_i$ are more general, especially including $w_i=0$  (??)
% \end{itemize}

% Euclidean/Metric-TSP:
% \begin{itemize}
%     \item TSP where $w_i=1$, $1 \leq i \leq n$
%     \item TSP where $w_i \in \{1,2, \ldots, d\}$ (??)
%     \item TSP where $w_i$ are more general, especially including $w_i=0$  (??)
% \end{itemize}

\ignore{
\section{Performance of Local Search}

We consider simple local search algorithms that start with a given permutation of the cities. The permutation is changed by a local search operators and the fitness of each permutation $\pi$ is given by the weighted TSP formula.

\subsection{Uniform Edge Weights}

Consider the case where all edges have the same weight. We assume that $d_{i,j}=1$ holds for all pairs $i$ and $j$.
Furthermore, we assume that we have $w(i)=i$, $1 \leq i \leq n$.  The goal is to find a permutation $\pi=(\pi_1, \ldots \pi_n)$ such that
the weighted tour length is minimal.

The optimal permutation is $\pi^*=(1, 2, \ldots, n)$.
If the current permutation $\pi$ is not optimal then there is city $\pi_i$ for which $w(\pi_i) >w(\pi_{i-1})$ or $w(\pi_i) <w(\pi_{i+1})$. This implies that swapping $\pi_i$ with $\pi_{i-1}$ or $\pi_{i+1}$ reduces the tour cost. Each of the classical mutation operators jump, exchange, and inversion can carry out such a swap operator which implies that using one of these operators will always allow for an improvement as long as an optimal tour has not been obtained.

\subsection{\{1,2\} TSP and Using 3/2 TSP approximation}

Consider the $\{1,2\}$-TSP. It is known that there is always an improving $2$-opt step for the classical $\{1,2\}$-TSP if a $3/2$-approximation has
not been obtained yet.

We assume that a $3/2$-approximation for the TSP has been computed and use this tour (or the reverse tour) starting at city $1$ in order to obtain an approximation for the weighted TSP.

%We now work under the assumption that the optimal TSP tour has cost %$n$, i.e. it only consists of edges of cost $1$.

\subsection{\{1,2\} TSP and weighted TSP fitness function}}

\section{Randomized Search Heuristics}
\label{sec5}
In this section, we consider randomized search heuristics for \wdtsps{}. We start by investigating variants of randomized local search and examine the use of popular mutation operators traditionally used for the classical TSP. Afterwards, we examine EAX as a state-of-the-art solver for the TSP and its adaptation to \wdtsp{}.

\subsection{Problem instances}

\begin{figure*}[htb]
    \centering
    \includegraphics[width=\textwidth]{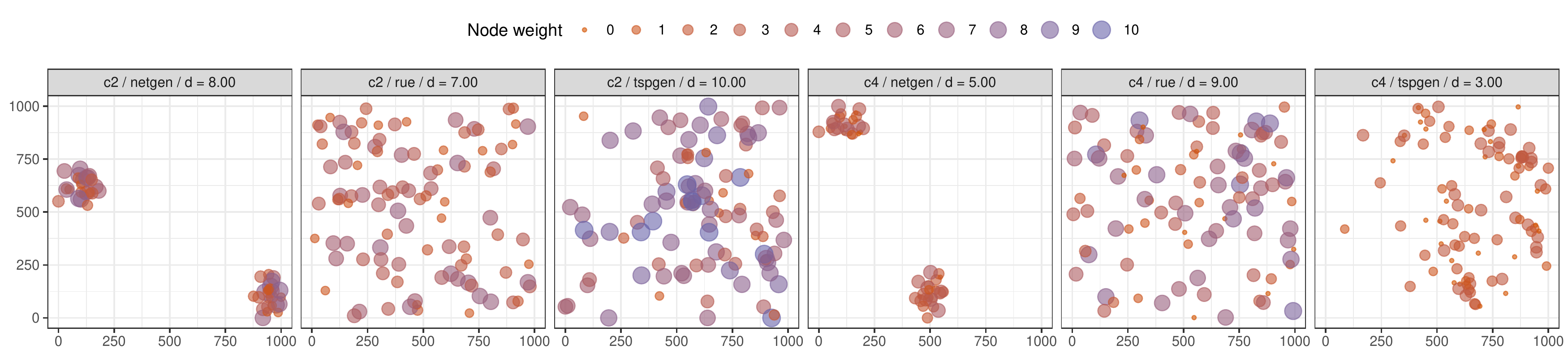}
    \caption{Examples of generated problem instances. Point size and color change with increasing node weight.}
    \label{fig:instance_gallery}
\end{figure*}

We consider a rich set of artificially generated metric TSP instances with different integer node weights.
The instance generation approach is performed in two steps. First, $n \in \{25, 50, 100, 500, 1\,000\}$ nodes are placed in the Euclidean plane (bounded to $[0, 1\,000]^2$) utilizing different node placement generators. In this study we consider Random Uniform Euclidean (\emph{rue}) placement, i.e., node coordinates are sampled uniformly at random within the bounding box.
Moreover, we consider so-called \emph{netgen} placement with two distinct clusters of points sampled from a bivariate Gaussian distribution around the cluster centers. Further, we consider \emph{tspgen} placement. Here, points are initially placed according to rue placement and subsequently altered in an iterative manner by a sequence of mutation operations~\cite{BKNW+19}.
The second step deals with the assignment of node weights. Here we consider three different configurations as described in the following. Note that $w_1 = 1$ across all cases for the fixed start node $p = 1$, whereas three different configurations are considered for $w_i$ (with $2 \leq i \leq n$):

\begin{itemize}
    \item $C_1$: $w_i = d$ with $d \in \{0.0, 0.1, \ldots, 1.0\}$,
    \item $C_2$: $w_i \in \{1, \ldots, d\}$ with $d \in \{2, \ldots, 10\}$ and
    \item $C_3$: $w_i \in \{0, \ldots, d\}$ with $d \in \{1, 2, \ldots, 10\}$.
\end{itemize}

To account for randomness in node placement and weight assignment we generate ten instances for each combination, which add up to $45\,000$ instances\footnote{The total number of instances results from 30 different configurations (see details of $C_1$ to $C_3$), five instance sizes ($n$), ten replications due to node placement and ten replications for the weight-to-node assignments.} (see Fig.~\ref{fig:instance_gallery} for examples).

\subsection{Performance of the RLS variants}

\begin{algorithm}[t]
     Choose a permutation $\pi$ of the given $n$ cities uniformly at random.\\
Produce $\pi'$ from $\pi$ by mutation.\\
    If $\wtour(\pi') \leq \wtour(\pi)$, set $\pi:=\pi'$.\\
   If not termination condition, go to 2).
  \caption{Randomized Local Search (RLS)}
  \label{alg:rls}
\end{algorithm}

We first consider randomized local search (RLS) shown in Algorithm~\ref{alg:rls}. It starts with a permutation $\pi$ of the given $n$ cities chosen uniformly at random. In each iteration a new permutation $\pi'$ is produced from the current permutation $\pi$ by a simple mutation. The new permutation $\pi'$ replaces $\pi$ if its weighted tour length is not larger than the one of~$\pi$.
We investigate popular mutation operators for the classical TSP in this context, namely inversion, exchange, and jump operations. Inversion operators usually achieve a high performance for the classical symmetric TSP as it only results in the update of the cost of two edges in the cost function. For \wdtsp{} the situation is different as weighted tours are not symmetric and the question arises whether inversion is still a good operator when considering \wdtsp{}.
We run RLS with the three aforementioned mutation operators. Per instance, 30 independent runs are performed with a stopping condition of $1\,000 \cdot n$ function evaluations.
% \katrin{Dieser Satz ist zu lang für mich mit den vielen ``each'', weiss nicht wie ich ihn umformulieren soll...}\pascal{stimme zu; wollte ihn grad umformulieren, aber bin dann ueber die 30 independent runs gestolpert, da doch zuvor schon gesagt wurde, dass man jede mutationsvariante auf jeder RUE-instanz 30x laufen laesst}
The performance is measured as follows: let $\pi$ be the final solution of algorithm $A$ on instance $I$ and let $\pi^{*}$ be the best, i.e., shortest, tour found in all runs of all algorithms on $I$. We measure the performance as the relative deviation from $\pi^{*}$, i.e.,
\begin{align}\label{eq:perf_rls}
\text{perf}(\pi) = \left(\frac{\wtour(\pi)}{\wtour(\pi^{*})} - 1\right)\cdot100 \geq 0.
\end{align}
Note that this value is $0$ if $\wtour(\pi) = \wtour(\pi^{*})$. This measure allows to aggregate over instance sizes.

The results of our experiments comparing the three operators on the three instance classes $C_1$, $C_2$, and $C_3$ are shown in Table~\ref{tab:rls}. Here we report the mean, standard deviation, the median and results of pairwise Wilcoxon-Mann-Whitney tests with Bonferroni $p$-value adjustment of the performance values defined in Eq.~\ref{eq:perf_rls} split by class and $d$-value.
It can be observed that the RLS variant using the inversion operation outperforms the other two variants for almost all settings. Comparing RLS using exchange operations with RLS using jump operations, we can see that jump operations are preferable over exchanges for the class $C_1$ whereas exchanges are preferable over jumps for the classes $C_2$ and $C_3$.
%\katrin{Das sollte $C_2$ und $C_3$ sein, oder?}\pascal{agree :-)}

\begin{table*}[htb]
\renewcommand{\arraystretch}{1.1}
\renewcommand{\tabcolsep}{4.2pt}
\caption{\label{tab:rls}Tabular values of \textbf{mean}, standard deviation (\textbf{std}), \textbf{median} and results of pairwise statistical tests (\textbf{stat}). Results are split by instance classes $C_1$, $C_2$, $C_3$ and the value of $d$.}
\begin{tiny}
\centering
\begin{tabular}[t]{lrrrrrrrrrrrrr}
\toprule
\multicolumn{1}{c}{\textbf{ }} & \multicolumn{1}{c}{\textbf{ }} & \multicolumn{4}{c}{\textbf{\textcolor{brewerGreen}{RLS[Exchange] (1)}}} & \multicolumn{4}{c}{\textbf{\textcolor{brewerOrange}{RLS[Inversion] (2)}}} & \multicolumn{4}{c}{\textbf{\textcolor{brewerViolet}{RLS[Jump] (3)}}} \\
\cmidrule(l{3pt}r{3pt}){3-6} \cmidrule(l{3pt}r{3pt}){7-10} \cmidrule(l{3pt}r{3pt}){11-14}
\textbf{Cl} & $d$ & \textbf{mean} & \textbf{std} & \textbf{med} & \textbf{stat} & \textbf{mean} & \textbf{std} & \textbf{med} & \textbf{stat} & \textbf{mean} & \textbf{std} & \textbf{med} & \textbf{stat}\\
\midrule
 & 0.0 & 104.06 & 76.33 & 67.01 &  & \cellcolor{gray!20}{\textbf{4.35}} & 3.46 & 3.45 & \textcolor{brewerGreen}{$\text{1}^{+}$}, \textcolor{brewerViolet}{$\text{3}^{+}$} & 42.93 & 18.75 & 43.64 & \textcolor{brewerGreen}{$\text{1}^{+}$}\\

 & 0.1 & 65.07 & 33.61 & 56.57 &  & \cellcolor{gray!20}{\textbf{16.80}} & 11.96 & 14.87 & \textcolor{brewerGreen}{$\text{1}^{+}$}, \textcolor{brewerViolet}{$\text{3}^{+}$} & 45.46 & 25.87 & 41.13 & \textcolor{brewerGreen}{$\text{1}^{+}$}\\

 & 0.2 & 64.11 & 33.34 & 55.08 &  & \cellcolor{gray!20}{\textbf{17.98}} & 12.70 & 16.07 & \textcolor{brewerGreen}{$\text{1}^{+}$}, \textcolor{brewerViolet}{$\text{3}^{+}$} & 49.37 & 28.32 & 45.35 & \textcolor{brewerGreen}{$\text{1}^{+}$}\\

 & 0.3 & 64.02 & 33.42 & 55.59 &  & \cellcolor{gray!20}{\textbf{18.71}} & 13.09 & 16.82 & \textcolor{brewerGreen}{$\text{1}^{+}$}, \textcolor{brewerViolet}{$\text{3}^{+}$} & 51.05 & 29.59 & 47.26 & \textcolor{brewerGreen}{$\text{1}^{+}$}\\

 & 0.4 & 64.14 & 33.59 & 56.07 &  & \cellcolor{gray!20}{\textbf{18.96}} & 13.29 & 16.93 & \textcolor{brewerGreen}{$\text{1}^{+}$}, \textcolor{brewerViolet}{$\text{3}^{+}$} & 52.20 & 29.96 & 48.74 & \textcolor{brewerGreen}{$\text{1}^{+}$}\\

 & 0.5 & 63.63 & 33.31 & 54.75 &  & \cellcolor{gray!20}{\textbf{19.28}} & 13.64 & 17.30 & \textcolor{brewerGreen}{$\text{1}^{+}$}, \textcolor{brewerViolet}{$\text{3}^{+}$} & 52.53 & 30.19 & 49.61 & \textcolor{brewerGreen}{$\text{1}^{+}$}\\

 & 0.6 & 63.35 & 33.37 & 54.87 &  & \cellcolor{gray!20}{\textbf{19.23}} & 13.58 & 17.27 & \textcolor{brewerGreen}{$\text{1}^{+}$}, \textcolor{brewerViolet}{$\text{3}^{+}$} & 53.06 & 30.34 & 49.91 & \textcolor{brewerGreen}{$\text{1}^{+}$}\\

 & 0.7 & 63.51 & 33.42 & 54.84 &  & \cellcolor{gray!20}{\textbf{19.36}} & 13.65 & 17.24 & \textcolor{brewerGreen}{$\text{1}^{+}$}, \textcolor{brewerViolet}{$\text{3}^{+}$} & 53.49 & 30.67 & 50.86 & \textcolor{brewerGreen}{$\text{1}^{+}$}\\

 & 0.8 & 63.57 & 33.76 & 54.33 &  & \cellcolor{gray!20}{\textbf{19.69}} & 13.78 & 17.75 & \textcolor{brewerGreen}{$\text{1}^{+}$}, \textcolor{brewerViolet}{$\text{3}^{+}$} & 53.59 & 30.69 & 50.69 & \textcolor{brewerGreen}{$\text{1}^{+}$}\\

 & 0.9 & 63.56 & 33.51 & 54.66 &  & \cellcolor{gray!20}{\textbf{19.60}} & 13.81 & 17.47 & \textcolor{brewerGreen}{$\text{1}^{+}$}, \textcolor{brewerViolet}{$\text{3}^{+}$} & 53.96 & 30.88 & 51.21 & \textcolor{brewerGreen}{$\text{1}^{+}$}\\

\multirow{-11}{*}{\raggedright\arraybackslash $C_1$} & 1.0 & 63.80 & 33.49 & 55.59 &  & \cellcolor{gray!20}{\textbf{19.87}} & 13.91 & 17.89 & \textcolor{brewerGreen}{$\text{1}^{+}$}, \textcolor{brewerViolet}{$\text{3}^{+}$} & 54.12 & 31.01 & 51.69 & \textcolor{brewerGreen}{$\text{1}^{+}$}\\
\cmidrule{1-14}
 & 2.0 & 61.57 & 31.99 & 54.80 &  & \cellcolor{gray!20}{\textbf{21.90}} & 15.07 & 19.86 & \textcolor{brewerGreen}{$\text{1}^{+}$}, \textcolor{brewerViolet}{$\text{3}^{+}$} & 62.10 & 36.99 & 57.16 & \\

 & 3.0 & 59.51 & 29.71 & 53.82 & \textcolor{brewerViolet}{$\text{3}^{+}$} & \cellcolor{gray!20}{\textbf{23.24}} & 15.91 & 21.60 & \textcolor{brewerGreen}{$\text{1}^{+}$}, \textcolor{brewerViolet}{$\text{3}^{+}$} & 64.72 & 37.78 & 60.14 & \\

 & 4.0 & 58.32 & 29.56 & 52.09 & \textcolor{brewerViolet}{$\text{3}^{+}$} & \cellcolor{gray!20}{\textbf{23.88}} & 16.18 & 22.25 & \textcolor{brewerGreen}{$\text{1}^{+}$}, \textcolor{brewerViolet}{$\text{3}^{+}$} & 66.73 & 38.96 & 62.06 & \\

 & 5.0 & 57.19 & 28.54 & 52.56 & \textcolor{brewerViolet}{$\text{3}^{+}$} & \cellcolor{gray!20}{\textbf{24.41}} & 16.63 & 22.86 & \textcolor{brewerGreen}{$\text{1}^{+}$}, \textcolor{brewerViolet}{$\text{3}^{+}$} & 67.13 & 38.33 & 62.97 & \\

 & 6.0 & 56.65 & 28.39 & 51.93 & \textcolor{brewerViolet}{$\text{3}^{+}$} & \cellcolor{gray!20}{\textbf{24.73}} & 16.69 & 23.52 & \textcolor{brewerGreen}{$\text{1}^{+}$}, \textcolor{brewerViolet}{$\text{3}^{+}$} & 67.95 & 39.00 & 63.89 & \\

 & 7.0 & 55.71 & 28.24 & 51.30 & \textcolor{brewerViolet}{$\text{3}^{+}$} & \cellcolor{gray!20}{\textbf{25.08}} & 16.92 & 23.86 & \textcolor{brewerGreen}{$\text{1}^{+}$}, \textcolor{brewerViolet}{$\text{3}^{+}$} & 67.52 & 39.08 & 63.26 & \\

 & 8.0 & 55.26 & 27.69 & 50.93 & \textcolor{brewerViolet}{$\text{3}^{+}$} & \cellcolor{gray!20}{\textbf{25.09}} & 16.86 & 23.89 & \textcolor{brewerGreen}{$\text{1}^{+}$}, \textcolor{brewerViolet}{$\text{3}^{+}$} & 68.20 & 38.69 & 64.59 & \\

 & 9.0 & 55.07 & 27.59 & 50.13 & \textcolor{brewerViolet}{$\text{3}^{+}$} & \cellcolor{gray!20}{\textbf{25.56}} & 17.07 & 24.41 & \textcolor{brewerGreen}{$\text{1}^{+}$}, \textcolor{brewerViolet}{$\text{3}^{+}$} & 68.21 & 38.87 & 63.55 & \\

\multirow{-9}{*}{\raggedright\arraybackslash $C_2$} & 10.0 & 54.56 & 27.10 & 50.02 & \textcolor{brewerViolet}{$\text{3}^{+}$} & \cellcolor{gray!20}{\textbf{25.18}} & 17.16 & 23.90 & \textcolor{brewerGreen}{$\text{1}^{+}$}, \textcolor{brewerViolet}{$\text{3}^{+}$} & 68.40 & 38.82 & 64.13 & \\
\cmidrule{1-14}
 & 1.0 & \cellcolor{gray!20}{\textbf{39.04}} & 19.34 & 39.78 & \textcolor{brewerOrange}{$\text{2}^{+}$}, \textcolor{brewerViolet}{$\text{3}^{+}$} & 41.42 & 26.42 & 43.21 & \textcolor{brewerViolet}{$\text{3}^{+}$} & 62.55 & 32.41 & 62.50 & \\

 & 2.0 & 44.24 & 21.58 & 43.80 & \textcolor{brewerViolet}{$\text{3}^{+}$} & \cellcolor{gray!20}{\textbf{34.33}} & 21.70 & 35.35 & \textcolor{brewerGreen}{$\text{1}^{+}$}, \textcolor{brewerViolet}{$\text{3}^{+}$} & 65.49 & 34.24 & 64.71 & \\

 & 3.0 & 46.26 & 22.64 & 45.26 & \textcolor{brewerViolet}{$\text{3}^{+}$} & \cellcolor{gray!20}{\textbf{32.80}} & 21.06 & 33.07 & \textcolor{brewerGreen}{$\text{1}^{+}$}, \textcolor{brewerViolet}{$\text{3}^{+}$} & 67.80 & 34.84 & 66.98 & \\

 & 4.0 & 47.58 & 23.46 & 46.25 & \textcolor{brewerViolet}{$\text{3}^{+}$} & \cellcolor{gray!20}{\textbf{31.08}} & 19.97 & 31.53 & \textcolor{brewerGreen}{$\text{1}^{+}$}, \textcolor{brewerViolet}{$\text{3}^{+}$} & 68.87 & 36.08 & 67.77 & \\

 & 5.0 & 48.32 & 23.69 & 46.79 & \textcolor{brewerViolet}{$\text{3}^{+}$} & \cellcolor{gray!20}{\textbf{30.02}} & 19.39 & 30.05 & \textcolor{brewerGreen}{$\text{1}^{+}$}, \textcolor{brewerViolet}{$\text{3}^{+}$} & 69.28 & 36.56 & 67.93 & \\

 & 6.0 & 48.55 & 24.43 & 46.12 & \textcolor{brewerViolet}{$\text{3}^{+}$} & \cellcolor{gray!20}{\textbf{28.90}} & 19.06 & 28.40 & \textcolor{brewerGreen}{$\text{1}^{+}$}, \textcolor{brewerViolet}{$\text{3}^{+}$} & 69.27 & 37.44 & 67.23 & \\

 & 7.0 & 49.51 & 24.38 & 47.72 & \textcolor{brewerViolet}{$\text{3}^{+}$} & \cellcolor{gray!20}{\textbf{28.70}} & 18.80 & 28.42 & \textcolor{brewerGreen}{$\text{1}^{+}$}, \textcolor{brewerViolet}{$\text{3}^{+}$} & 69.94 & 37.59 & 68.83 & \\

 & 8.0 & 49.93 & 24.34 & 48.29 & \textcolor{brewerViolet}{$\text{3}^{+}$} & \cellcolor{gray!20}{\textbf{29.07}} & 19.02 & 28.85 & \textcolor{brewerGreen}{$\text{1}^{+}$}, \textcolor{brewerViolet}{$\text{3}^{+}$} & 70.23 & 37.19 & 68.23 & \\

 & 9.0 & 50.45 & 24.28 & 48.58 & \textcolor{brewerViolet}{$\text{3}^{+}$} & \cellcolor{gray!20}{\textbf{28.96}} & 19.30 & 28.42 & \textcolor{brewerGreen}{$\text{1}^{+}$}, \textcolor{brewerViolet}{$\text{3}^{+}$} & 70.60 & 37.48 & 68.79 & \\

\multirow{-10}{*}{\raggedright\arraybackslash $C_3$} & 10.0 & 50.09 & 24.68 & 48.16 & \textcolor{brewerViolet}{$\text{3}^{+}$} & \cellcolor{gray!20}{\textbf{28.50}} & 18.73 & 28.17 & \textcolor{brewerGreen}{$\text{1}^{+}$}, \textcolor{brewerViolet}{$\text{3}^{+}$} & 70.22 & 37.74 & 68.05 & \\
\bottomrule
\end{tabular}
\end{tiny}
\end{table*}

\subsection{Performance of EAX}

Next we investigate the adaptation of the evolutionary TSP solver EAX~\cite{Nagata2013}. EAX is an evolutionary algorithm which uses a powerful \emph{edge assembly crossover} operator to produce high-quality offspring individuals and a sophisticated population diversity mechanism.
This algorithm has shown state-of-the-art performance in inexact TSP-solving in various studies~\cite{Nagata2013, Kotthoff2015, KKBHTLeveragingTSP}. We modified the algorithm to enable handling of node weights and consider two different fitness functions that guide the evolutionary search process: the classical TSP fitness function (ignoring node weights) and a fitness function based on the weighted TSP costs $\wtour(\pi)$.
In the following we use the abbreviations EAX and W-EAX for brevity. Our main interest is the difference of tour lengths obtained by runs of EAX and W-EAX, respectively, depending on the structure of the \wdtsp{} instances under consideration.

We then performed ten independent runs on each instance with both fitness functions, resulting in a total of $900\,000$ experiments which were strongly parallelized on a high performance computing cluster. EAX was run with a time-limit of five~seconds for instances with up to 100 nodes and three~minutes for larger instances to keep the computational costs reasonable. These values may seem small at first glance, however, studies in \cite{KKBHTLeveragingTSP} revealed that EAX is able to solve even large TSP instances -- with thousands of nodes -- to optimality within few seconds. Note that after completion of each run -- regardless of the fitness function used internally as a driver -- the final tour was evaluated by means of the weighted TSP fitness function.
For evaluation we calculate the weighted tour length ratios, i.e., the weighted tour length obtained by EAX divided by the respective weighted tour length obtained by W-EAX for each instance and run. Note that values greater than~1 indicate an advantage of W-EAX over EAX. Fig.~\ref{fig:boxplot_ratios} shows the distributions of weighted tour length ratios separated by instance size, configuration and maximum weight $d$.
%We omit results for other values of $n$ since they do not provide any additional insights.

\begin{figure*}[tb]
    \centering
    \includegraphics[width=\textwidth]{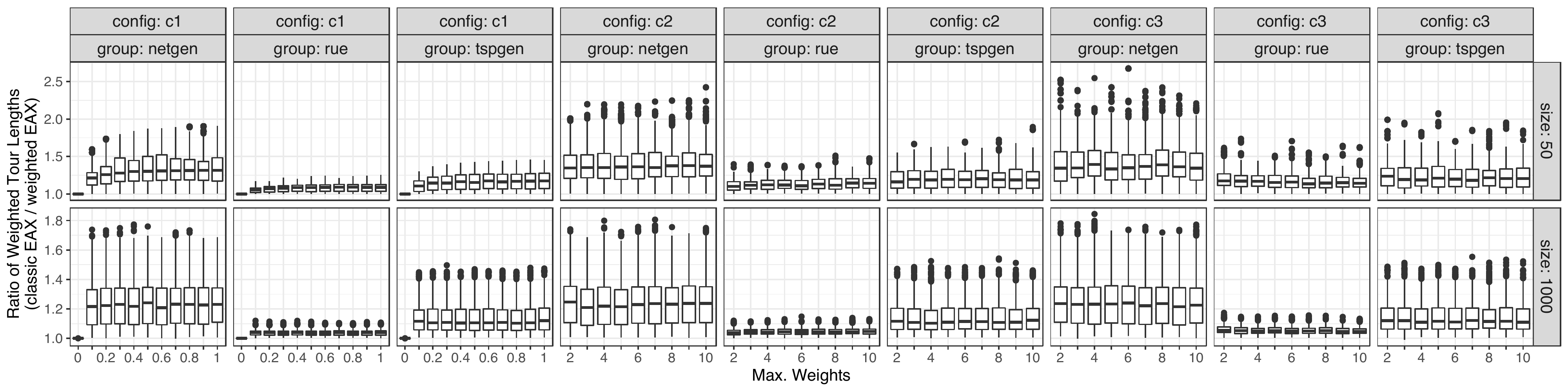}
    \caption{Boxplots illustrating ratios between found tour lengths. Each instance was optimized with EAX -- which internally used the weighted or classical TSP fitness function, respectively -- and all resulting tours have been assessed using the weighted fitness. We show results for $n \in \{50, 1000\}$ due to space limitations, but patterns for omited data are similar.}
    \label{fig:boxplot_ratios}
\end{figure*}

\begin{figure*}[tb]
    \centering
    \includegraphics[width=\textwidth]{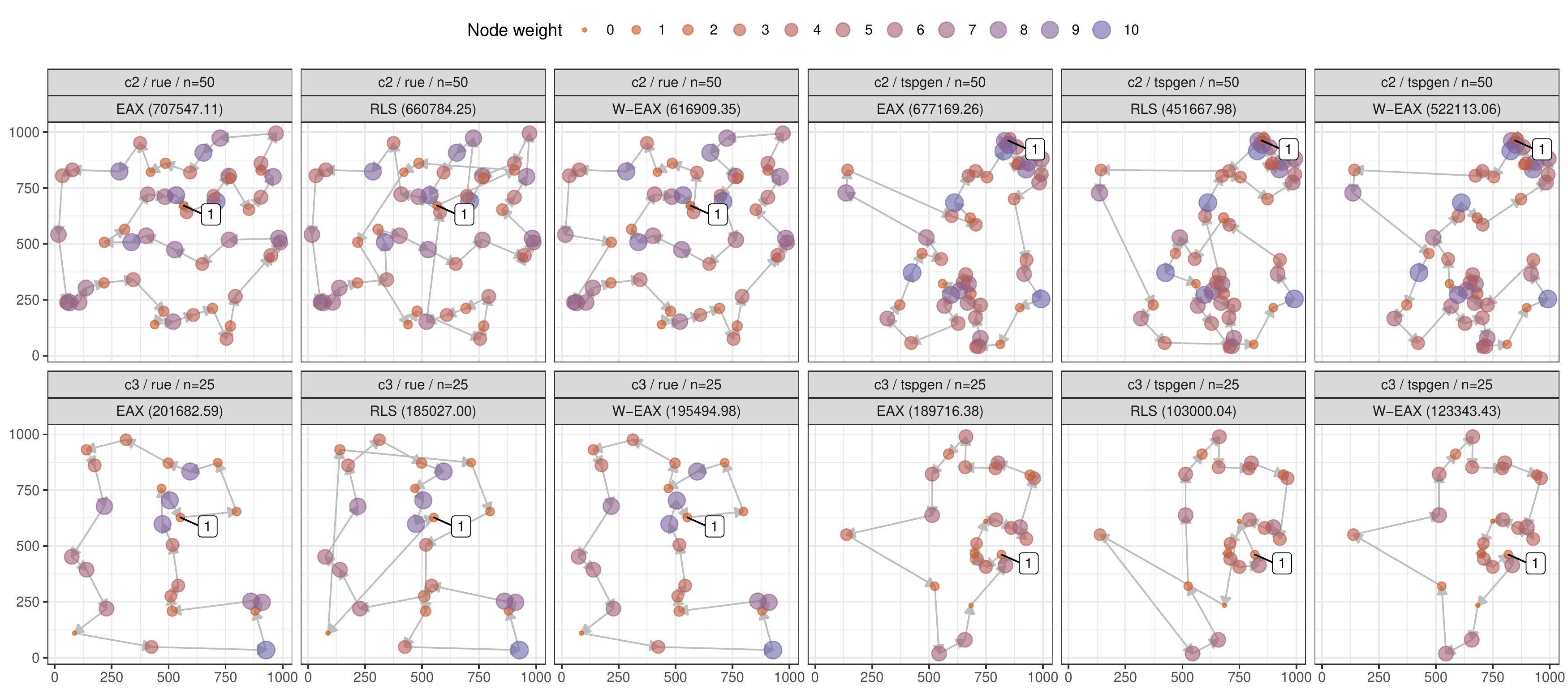}
    \caption{This plot shows \wdtsp{} tours determined by EAX, W-EAX and RLS, respectively, for four selected, yet representative problem instances. The start node $\pi_1 = 1$ is highlighted and the direction of the tour is indicated by arrows.}
    \label{fig:tours}
\end{figure*}

As expected we observed all ratios being greater than 1 with median values at about 1.15 across all combinations. Frequently, large outliers reached ratios up to 1.75 for instances with at least 100 nodes and even $> 2.5$ for smaller $n$. However, in general, no patterns can be identified with respect to configuration or maximal node weight. The sole exception is configuration $C_1$ and $n \in \{25, 50\}$ where we observe a slightly increasing trend in median ratios with increasing $d \in \{0.0, 0.1, \ldots, 1.0\}$ (see top-left boxplots in Fig.~\ref{fig:boxplot_ratios}). This trend vanishes for $n \geq 100$ as a high number of nodes already imposes a large cumulative weight when considering only a part of any tour.

In contrast, comparing node placement, we observe strong differences. While the ratios are lowest when the nodes are placed uniformly at random (rue), with values below $1.15$ for large instances with at least $500$ nodes, ratios become increasingly larger with increasing instance structure. For $n = 1\,000$ nodes, ratios on tspgen instances go up to about $1.5$ while on netgen instances -- with strongly segregated clusters -- ratios reach values up to $1.8$ with a median of about $1.2$. Hence, more than $50\%$ of the ratios are higher than the maximal ratio in case of rue placement.

Fig.~\ref{fig:tours} shows exemplary tours obtained by EAX, W-EAX and -- for comparison -- RLS with inversion mutation. The tours are shown for four instances of class $C_2$ (top row) and class $C_3$ (bottom row). We can make the following observation: since the variation operator of EAX was not modified, the resulting tours of both EAX and W-EAX are free of crossings. However, for the \wdtsp{}, optimal tours do not necessarily need to avoid crossings. As for instance shown in the fifth column of Fig.~\ref{fig:tours} (top row), RLS often finds solutions with many crossings, resulting in much shorter tours than produced by both EAX variants.
%than the tours calculated by both EAX variants.

Additionally, for the \wdtsp{} -- in particular in the presence of segregated cluster structures -- it is often beneficial that long edges are included early in the permutation, as a later consideration would be associated with (the burden of) a huge amount of accumulated node weights. Once again, this is observable in the fifth column of Fig.~\ref{fig:tours}: RLS places long edges early in the tour to quickly reach the top left cluster (top row) or the isolated node (bottom row), and then leave it just as quickly again. In case of netgen instances an even stronger effect of long inter-cluster edges can be expected. Here, W-EAX manages -- due to the \wdtsp{} fitness function -- to cumulate less weight in the respective clusters before they are left in order to reach another cluster. This explains why ratios increase with increasing cluster segregation.

\section{Conclusions}
Motivated by different complex variants of the traveling salesperson problem, we have introduced the node weight dependent TSP called \wdtsps{} which captures aspects of important complex TSP variants such as the time dependent TSP or the traveling thief problem. We have pointed out the relation of \wdtsps{} to the TSP and how the weights on the nodes impact the structure of the problem. Our insights provided the tools for designing approximation algorithms for the metric version of the problem. Furthermore, we have shown that approximation algorithms for the $\{1,2\}$-TSP can be used as the basis for approximation algorithms for \wdtsps{} when also considering the reverse tour.

Our experimental studies show that, on almost all considered settings and a wide range of instances, inversion mutation is superior to exchange and jump operators when adopted by randomized local search. Furthermore, experimental investigations on the state-of-the-art TSP solver EAX show that the weights lead to significantly different results when comparing \wdtsps{} to the classical TSP. For $n=50$ nodes, we encountered instances where the performances of the obtained tours differ by a factor of $2.75$ whereas this factor is $1.75$ for a wide range of instances having $n=1000$ cities.

% Our experimental investigations using EAX on a wide range of instances show that the weights lead to a significantly different results when comparing \wdtsps{} to the classical TSP. For $n=50$ nodes, we encountered instances where the performance of the obtained tours differs by a factor of $2.75$ whereas this factor is $1.75$ for a wide range of instances having $n=1000$ cities.

%\frank{Need to add something on experiments}

For future work, it would be interesting to study other state-of-the-art heuristics for the TSP and how to adapt them to \wdtsps{}. In order to systematically judge the performance of such approaches, it would be highly beneficial to have efficient exact solvers for \wdtsps{}.

% \begin{acks}
% ...
% \end{acks}

\bibliographystyle{plain}
\bibliography{arxiv.bib}

\end{document}